\documentclass[copyright,creativecommons,noncommercial]{eptcs}


\usepackage{graphicx}
\usepackage{times}
\usepackage{tabulary}
\usepackage{array}  

\usepackage{amsmath,amsthm,amssymb}    
\theoremstyle{definition}  
\newtheorem{theorem}{Theorem}[section]

\newtheorem{definition}[theorem]{Definition}

\newtheorem{example}[theorem]{Example}

\newtheorem{example*}[theorem]{Example*}
\newtheorem{examples*}[theorem]{Examples*}

\newtheorem{remark*}[theorem]{Remark*}

\usepackage{color}
\def\bR{\begin{color}{red}}     
\def\bB{\begin{color}{blue}}
\def\bM{\begin{color}{magenta}}  
\def\bC{\begin{color}{cyan}}  
\def\bW{\begin{color}{white}}
\def\bBl{\begin{color}{black}}   
\def\bG{\begin{color}{green}}  
\def\bY{\begin{color}{yellow}}  
\def\e{\end{color}\xspace}

\providecommand{\urlalt}[2]{\href{#1}{#2}}

\usepackage{docmute}
\usepackage{enumerate,multirow}
\usepackage{amsmath}
\usepackage{amsthm,amsfonts,amssymb,mathtools,xspace}
\usepackage{tikz}
\usetikzlibrary{decorations.pathreplacing}
\usetikzlibrary{decorations.markings}
\usetikzlibrary{calc}
\usetikzlibrary{shapes.geometric}
\usepackage{etoolbox}
\usepackage{datetime}


\pretolerance=2000

\theoremstyle{plain}


\newenvironment{pic}[1][]
{\begin{aligned}\begin{tikzpicture}[#1]}
{\end{tikzpicture}\end{aligned}}
\newcommand{\edges}[1][]%
{
}

\makeatletter
\def\calign@preamble{%
   &\hfil\strut@
    \setboxz@h{\@lign$\m@th\displaystyle{##}$}%
    \ifmeasuring@\savefieldlength@\fi
    \set@field
    \hfil
    \tabskip\alignsep@
}
\let\cmeasure@\measure@
\patchcmd\cmeasure@{\divide\@tempcntb\tw@}{}{}{}
\patchcmd\cmeasure@{\divide\@tempcntb\tw@}{}{}{}
\patchcmd\cmeasure@{\ifodd\maxfields@
  \global\advance\maxfields@\@ne
  \fi}{}{}{}

\makeatother

\makeatletter

\makeatother
\setcounter{secnumdepth}{2}


\newcommand\tinymatrix[1]
{\left( \hspace{-2pt} \renewcommand\thickspace{\kern2pt} \scriptstyle\begin{smallmatrix} #1 \end{smallmatrix} \hspace{-2pt} \right)}

\newcommand\ignore[1]{}


\pgfdeclarelayer{foreground}
\pgfdeclarelayer{background}
\pgfsetlayers{main,foreground,background}
\tikzset{smallbox/.style={draw, fill=white, minimum height=0.45cm, minimum width=0.45cm, inner sep=-100pt}}

\makeatletter
\pgfkeys{%
  /tikz/on layer/.code={
    \pgfonlayer{#1}\begingroup
    \aftergroup\endpgfonlayer
    \aftergroup\endgroup
  },
  /tikz/node on layer/.code={
    \gdef\node@@on@layer{%
      \setbox\tikz@tempbox=\hbox\bgroup\pgfonlayer{#1}\unhbox\tikz@tempbox\endpgfonlayer\egroup}
    \aftergroup\node@on@layer
  },
  /tikz/end node on layer/.code={
    \endpgfonlayer\endgroup\endgroup
  }
}
\def\node@on@layer{\aftergroup\node@@on@layer}
\makeatother\def\thickness{0.7pt}
\tikzstyle{oldmorphism}=[minimum width=30pt, minimum height=16pt, draw, font=\small, inner sep=0pt, fill=white, line width=\thickness]
\tikzstyle{cross}=[preaction={draw=white, -, line width=10pt}]
\tikzstyle{braid}=[double=black, line width=3*\thickness, double distance=\thickness, white]
\tikzstyle{string}=[line width=\thickness]
\tikzstyle{scalar}=[circle, inner sep=0pt, minimum width=15pt, draw, line width=\thickness]
\tikzstyle{dot}=[circle, draw=black, fill=black!25, inner sep=.4ex, line width=\thickness, node on layer=foreground]
\tikzstyle{blackdot}=[circle, draw=black, fill=black!100, inner sep=.4ex, line width=\thickness, node on layer=foreground]
\tikzstyle{graydot}=[circle, draw=black, fill=gray!40!white, inner sep=.4ex, line width=\thickness, node on layer=foreground]
\tikzstyle{whitedot}=[circle, draw=black, fill=white, inner sep=.4ex, line width=\thickness, node on layer=foreground]
\tikzstyle{mixedmorphism}=[morphism, minimum width=30pt, minimum height=16pt, draw, font=\small, inner sep=0pt, fill=white, line width=\thickness,rounded corners=1ex]
\tikzstyle{thick}=[line width=\thickness]
\tikzstyle{tiny}=[font=\tiny]

\tikzset{arrow/.style={decoration={
    markings,
    mark=at position #1 with \arrow{thickarrow}},
    postaction=decorate}
}
\tikzset{reverse arrow/.style={decoration={
    markings,
    mark=at position #1 with \arrow{reversethickarrow}},
    postaction=decorate}
}

\newif\ifblack\pgfkeys{/tikz/black/.is if=black}
\newif\ifwedge\pgfkeys{/tikz/wedge/.is if=wedge}
\newif\ifvflip\pgfkeys{/tikz/vflip/.is if=vflip}
\newif\ifhflip\pgfkeys{/tikz/hflip/.is if=hflip}
\newif\ifhvflip\pgfkeys{/tikz/hvflip/.is if=hvflip}
\newif\ifconnectnw\pgfkeys{/tikz/connect nw/.is if=connectnw}
\newif\ifconnectne\pgfkeys{/tikz/connect ne/.is if=connectne}
\newif\ifconnectsw\pgfkeys{/tikz/connect sw/.is if=connectsw}
\newif\ifconnectse\pgfkeys{/tikz/connect se/.is if=connectse}
\newif\ifconnectn\pgfkeys{/tikz/connect n/.is if=connectn}
\newif\ifconnects\pgfkeys{/tikz/connect s/.is if=connects}
\newif\ifconnectnwf\pgfkeys{/tikz/connect nw >/.is if=connectnwf}
\newif\ifconnectnef\pgfkeys{/tikz/connect ne >/.is if=connectnef}
\newif\ifconnectswf\pgfkeys{/tikz/connect sw >/.is if=connectswf}
\newif\ifconnectsef\pgfkeys{/tikz/connect se >/.is if=connectsef}
\newif\ifconnectnf\pgfkeys{/tikz/connect n >/.is if=connectnf}
\newif\ifconnectsf\pgfkeys{/tikz/connect s >/.is if=connectsf}
\newif\ifconnectnwr\pgfkeys{/tikz/connect nw </.is if=connectnwr}
\newif\ifconnectner\pgfkeys{/tikz/connect ne </.is if=connectner}
\newif\ifconnectswr\pgfkeys{/tikz/connect sw </.is if=connectswr}
\newif\ifconnectser\pgfkeys{/tikz/connect se </.is if=connectser}
\newif\ifconnectnr\pgfkeys{/tikz/connect n </.is if=connectnr}
\newif\ifconnectsr\pgfkeys{/tikz/connect s </.is if=connectsr}
\tikzset{keylengthnw/.initial=\connectheight}
\tikzset{keylengthn/.initial =\connectheight}
\tikzset{keylengthne/.initial=\connectheight}
\tikzset{keylengthsw/.initial=\connectheight}
\tikzset{keylengths/.initial =\connectheight}
\tikzset{keylengthse/.initial=\connectheight}
\tikzset{connect nw length/.style={connect nw=true, keylengthnw={#1}}}
\tikzset{connect n length/.style ={connect n =true, keylengthn ={#1}}}
\tikzset{connect ne length/.style={connect ne=true, keylengthne={#1}}}
\tikzset{connect sw length/.style={connect sw=true, keylengthsw={#1}}}
\tikzset{connect s length/.style ={connect s =true, keylengths ={#1}}}
\tikzset{connect se length/.style={connect se=true, keylengthse={#1}}}
\tikzset{connect nw < length/.style={connect nw <=true, keylengthnw={#1}}}
\tikzset{connect n < length/.style ={connect n <=true,  keylengthn ={#1}}}
\tikzset{connect ne < length/.style={connect ne <=true, keylengthne={#1}}}
\tikzset{connect sw < length/.style={connect sw <=true, keylengthnw={#1}}}
\tikzset{connect s < length/.style ={connect s <=true,  keylengths ={#1}}}
\tikzset{connect se < length/.style={connect se <=true, keylengthse={#1}}}
\tikzset{connect nw > length/.style={connect nw >=true, keylengthnw={#1}}}
\tikzset{connect n > length/.style ={connect n >=true,  keylengthn ={#1}}}
\tikzset{connect ne > length/.style={connect ne >=true, keylengthne={#1}}}
\tikzset{connect sw > length/.style={connect sw >=true, keylengthsw={#1}}}
\tikzset{connect s > length/.style ={connect s >=true,  keylengths ={#1}}}
\tikzset{connect se > length/.style={connect se >=true, keylengthse={#1}}}

\newlength\morphismheight
\setlength\morphismheight{0.5cm}
\newlength\minimummorphismwidth
\setlength\minimummorphismwidth{0.1cm}
\newlength\stateheight
\setlength\stateheight{0.4cm}
\newlength\minimumstatewidth
\setlength\minimumstatewidth{0.89cm}
\newlength\connectheight
\setlength\connectheight{0.5cm}
\tikzset{width/.initial=\minimummorphismwidth}

\makeatletter
\pgfarrowsdeclare{thickarrow}{thickarrow}
{
  \pgfutil@tempdima=-0.84pt%
  \advance\pgfutil@tempdima by-1.3\pgflinewidth%
  \pgfutil@tempdimb=-1.7pt%
  \advance\pgfutil@tempdimb by.625\pgflinewidth%
  \pgfarrowsleftextend{+\pgfutil@tempdima}
  \pgfarrowsrightextend{+\pgfutil@tempdimb}
}
{
  \pgfmathparse{\pgfgetarrowoptions{thickarrow}}%
  \pgfsetlinewidth{1.25 pt}
  \pgfutil@tempdima=0.28pt%
  \advance\pgfutil@tempdima by.3\pgflinewidth%
  \pgfsetlinewidth{0.8\pgflinewidth}
  \pgfsetdash{}{+0pt}
  \pgfsetroundcap
  \pgfsetroundjoin
  \pgfpathmoveto{\pgfqpoint{-3\pgfutil@tempdima}{4\pgfutil@tempdima}}
  \pgfpathcurveto
  {\pgfqpoint{-2.75\pgfutil@tempdima}{2.5\pgfutil@tempdima}}
  {\pgfqpoint{0pt}{0.25\pgfutil@tempdima}}
  {\pgfqpoint{0.75\pgfutil@tempdima}{0pt}}
  \pgfpathcurveto
  {\pgfqpoint{0pt}{-0.25\pgfutil@tempdima}}
  {\pgfqpoint{-2.75\pgfutil@tempdima}{-2.5\pgfutil@tempdima}}
  {\pgfqpoint{-3\pgfutil@tempdima}{-4\pgfutil@tempdima}}
  \pgfusepathqstroke
}
\pgfarrowsdeclare{reversethickarrow}{reversethickarrow}
{
  \pgfutil@tempdima=-0.84pt%
  \advance\pgfutil@tempdima by-1.3\pgflinewidth%
  \pgfutil@tempdimb=0.2pt%
  \advance\pgfutil@tempdimb by.625\pgflinewidth%
  \pgfarrowsleftextend{+\pgfutil@tempdima}
  \pgfarrowsrightextend{+\pgfutil@tempdimb}
}
{
  \pgftransformxscale{-1}
  \pgfmathparse{\pgfgetarrowoptions{thickarrow}}%
  \ifpgfmathunitsdeclared%
    \pgfmathparse{\pgfmathresult pt}%
  \else%
    \pgfmathparse{\pgfmathresult*\pgflinewidth}%
  \fi%
  \let\thickness=\pgfmathresult
  \pgfsetlinewidth{1.25 pt}
  \pgfutil@tempdima=0.28pt%
  \advance\pgfutil@tempdima by.3\pgflinewidth%
  \pgfsetlinewidth{0.8\pgflinewidth}
  \pgfsetdash{}{+0pt}
  \pgfsetroundcap
  \pgfsetroundjoin
  \pgfpathmoveto{\pgfqpoint{-3\pgfutil@tempdima}{4\pgfutil@tempdima}}
  \pgfpathcurveto
  {\pgfqpoint{-2.75\pgfutil@tempdima}{2.5\pgfutil@tempdima}}
  {\pgfqpoint{0pt}{0.25\pgfutil@tempdima}}
  {\pgfqpoint{0.75\pgfutil@tempdima}{0pt}}
  \pgfpathcurveto
  {\pgfqpoint{0pt}{-0.25\pgfutil@tempdima}}
  {\pgfqpoint{-2.75\pgfutil@tempdima}{-2.5\pgfutil@tempdima}}
  {\pgfqpoint{-3\pgfutil@tempdima}{-4\pgfutil@tempdima}}
  \pgfusepathqstroke
}
\makeatother

\makeatletter
\pgfdeclareshape{ground}
{
    \savedanchor\centerpoint
    {
        \pgf@x=0pt
        \pgf@y=0pt
    }
    \anchor{center}{\centerpoint}
    \anchorborder{\centerpoint}
    \anchor{north}
    {
        \pgf@x=0pt
        \pgf@y=0.5*0.33*\stateheight
    }
    \anchor{south}
    {
        \pgf@x=0pt
        \pgf@y=0pt
    }
    \saveddimen\overallwidth
    {
        \pgfkeysgetvalue{/pgf/minimum width}{\minwidth}
        \pgf@x=\minimumstatewidth
        \ifdim\pgf@x<\minwidth
            \pgf@x=\minwidth
        \fi
    }
    \backgroundpath
    {
        \begin{pgfonlayer}{foreground}
        \pgfsetstrokecolor{black}
        \pgfsetlinewidth{1.25pt}
        \ifhflip
            \pgftransformyscale{-1}
        \fi
        \pgftransformscale{0.5}
        \pgfpathmoveto{\pgfpoint{-0.5*\overallwidth}{0}}
        \pgfpathlineto{\pgfpoint{0.5*\overallwidth}{0}}
        \pgfpathmoveto{\pgfpoint{-0.33*\overallwidth}{0.33*\stateheight}}
        \pgfpathlineto{\pgfpoint{0.33*\overallwidth}{0.33*\stateheight}}
        \pgfpathmoveto{\pgfpoint{-0.16*\overallwidth}{0.66*\stateheight}}
        \pgfpathlineto{\pgfpoint{0.16*\overallwidth}{0.66*\stateheight}}
        \pgfpathmoveto{\pgfpoint{-0.02*\overallwidth}{\stateheight}}
        \pgfpathlineto{\pgfpoint{0.02*\overallwidth}{\stateheight}}
        \pgfusepath{stroke}
        \end{pgfonlayer}
    }
}
\tikzset{forward arrow style/.style={every to/.style, decoration={
    markings,
    mark=at position 0.5 with \arrow{thickarrow}},
    postaction=decorate}}
\tikzset{reverse arrow style/.style={every to/.style, decoration={
    markings,
    mark=at position 0.5 with \arrow{reversethickarrow}},
    postaction=decorate}}
\pgfdeclareshape{morphism}
{
    \savedanchor\centerpoint
    {
        \pgf@x=0pt
        \pgf@y=0pt
    }
    \anchor{center}{\centerpoint}
    \anchorborder{\centerpoint}
    \saveddimen\savedlengthnw
    {
        \pgfkeysgetvalue{/tikz/keylengthnw}{\len}
        \pgf@x=\len
    }
    \saveddimen\savedlengthn
    {
        \pgfkeysgetvalue{/tikz/keylengthn}{\len}
        \pgf@x=\len
    }
    \saveddimen\savedlengthne
    {
        \pgfkeysgetvalue{/tikz/keylengthne}{\len}
        \pgf@x=\len
    }
    \saveddimen\savedlengthsw
    {
        \pgfkeysgetvalue{/tikz/keylengthsw}{\len}
        \pgf@x=\len
    }
    \saveddimen\savedlengths
    {
        \pgfkeysgetvalue{/tikz/keylengths}{\len}
        \pgf@x=\len
    }
    \saveddimen\savedlengthse
    {
        \pgfkeysgetvalue{/tikz/keylengthse}{\len}
        \pgf@x=\len
    }
    \saveddimen\overallwidth
    {
        \pgfkeysgetvalue{/tikz/width}{\minwidth}
        \pgf@x=\wd\pgfnodeparttextbox
        \ifdim\pgf@x<\minwidth
            \pgf@x=\minwidth
        \fi
    }
    \savedanchor{\upperrightcorner}
    {
        \pgf@y=.5\ht\pgfnodeparttextbox
        \advance\pgf@y by -.5\dp\pgfnodeparttextbox
        \pgf@x=.5\wd\pgfnodeparttextbox
    }
    \anchor{north}
    {
        \pgf@x=0pt
        \pgf@y=0.5\morphismheight
    }
    \anchor{north east}
    {
        \pgf@x=\overallwidth
        \multiply \pgf@x by 2
        \divide \pgf@x by 5
        \pgf@y=0.5\morphismheight
    }
    \anchor{east}
    {
        \pgf@x=\overallwidth
        \divide \pgf@x by 2
        \advance \pgf@x by 5pt
        \pgf@y=0pt
    }
    \anchor{west}
    {
        \pgf@x=-\overallwidth
        \divide \pgf@x by 2
        \advance \pgf@x by -5pt
        \pgf@y=0pt
    }
    \anchor{north west}
    {
        \pgf@x=-\overallwidth
        \multiply \pgf@x by 2
        \divide \pgf@x by 5
        \pgf@y=0.5\morphismheight
    }
    \anchor{connect nw}
    {
        \pgf@x=-\overallwidth
        \multiply \pgf@x by 2
        \divide \pgf@x by 5
        \pgf@y=0.5\morphismheight
        \advance\pgf@y by \savedlengthnw
    }
    \anchor{connect ne}
    {
        \pgf@x=\overallwidth
        \multiply \pgf@x by 2
        \divide \pgf@x by 5
        \pgf@y=0.5\morphismheight
        \advance\pgf@y by \savedlengthne
    }
    \anchor{connect sw}
    {
        \pgf@x=-\overallwidth
        \multiply \pgf@x by 2
        \divide \pgf@x by 5
        \pgf@y=-0.5\morphismheight
        \advance\pgf@y by -\savedlengthsw
    }
    \anchor{connect se}
    {
        \pgf@x=\overallwidth
        \multiply \pgf@x by 2
        \divide \pgf@x by 5
        \pgf@y=-0.5\morphismheight
        \advance\pgf@y by -\savedlengthse
    }
    \anchor{connect n}
    {
        \pgf@x=0pt
        \pgf@y=0.5\morphismheight
        \advance\pgf@y by \savedlengthn
    }
    \anchor{connect s}
    {
        \pgf@x=0pt
        \pgf@y=-0.5\morphismheight
        \advance\pgf@y by -\savedlengths
    }
    \anchor{south east}
    {
        \pgf@x=\overallwidth
        \multiply \pgf@x by 2
        \divide \pgf@x by 5
        \pgf@y=-0.5\morphismheight
    }
    \anchor{south west}
    {
        \pgf@x=-\overallwidth
        \multiply \pgf@x by 2
        \divide \pgf@x by 5
        \pgf@y=-0.5\morphismheight
    }
    \anchor{south}
    {
        \pgf@x=0pt
        \pgf@y=-0.5\morphismheight
    }
    \anchor{text}
    {
        \upperrightcorner
        \pgf@x=-\pgf@x
        \pgf@y=-\pgf@y
    }
    \backgroundpath
    {
        \pgfsetstrokecolor{black}
        \pgfsetlinewidth{\thickness}
        \begin{scope}
                \ifhflip
                    \pgftransformyscale{-1}
                \fi
                \ifvflip
                    \pgftransformxscale{-1}
                \fi
                \ifhvflip
                    \pgftransformxscale{-1}
                    \pgftransformyscale{-1}
                \fi
                \pgfpathmoveto{\pgfpoint
                    {-0.5*\overallwidth-5pt}
                    {0.5*\morphismheight}}
                \pgfpathlineto{\pgfpoint
                    {0.5*\overallwidth+5pt}
                    {0.5*\morphismheight}}
                \ifwedge
                    \pgfpathlineto{\pgfpoint
                        {0.5*\overallwidth + 15pt}
                        {-0.5*\morphismheight}}
                \else
                    \pgfpathlineto{\pgfpoint
                        {0.5*\overallwidth + 5pt}
                        {-0.5*\morphismheight}}
                \fi
                \pgfpathlineto{\pgfpoint
                    {-0.5*\overallwidth-5pt}
                    {-0.5*\morphismheight}}
                \pgfpathclose
                \pgfusepath{stroke}
        \end{scope}
        \ifconnectnw
            \pgfpathmoveto{\pgfpoint
                {-0.4*\overallwidth}
                {0.5*\morphismheight}}
            \pgfpathlineto{\pgfpoint
                {-0.4*\overallwidth}
                {0.5*\morphismheight+\savedlengthnw}}
            \pgfusepath{stroke}
        \fi
        \ifconnectne
            \pgfpathmoveto{\pgfpoint
                {0.4*\overallwidth}
                {0.5*\morphismheight}}
            \pgfpathlineto{\pgfpoint
                {0.4*\overallwidth}
                {0.5*\morphismheight+\savedlengthne}}
            \pgfusepath{stroke}
        \fi
        \ifconnectsw
            \pgfpathmoveto{\pgfpoint
                {-0.4*\overallwidth}
                {-0.5*\morphismheight}}
            \pgfpathlineto{\pgfpoint
                {-0.4*\overallwidth}
                {-0.5*\morphismheight-\savedlengthsw}}
            \pgfusepath{stroke}
        \fi
        \ifconnectse
            \pgfpathmoveto{\pgfpoint
                {0.4*\overallwidth}
                {-0.5*\morphismheight}}
            \pgfpathlineto{\pgfpoint
                {0.4*\overallwidth}
                {-0.5*\morphismheight-\savedlengthse}}
            \pgfusepath{stroke}
        \fi
        \ifconnectn
            \pgfpathmoveto{\pgfpoint
                {0pt}
                {0.5*\morphismheight}}
            \pgfpathlineto{\pgfpoint
                {0pt}
                {0.5*\morphismheight+\savedlengthn}}
            \pgfusepath{stroke}
        \fi
        \ifconnects
            \pgfpathmoveto{\pgfpoint
                {0pt}
                {-0.5*\morphismheight}}
            \pgfpathlineto{\pgfpoint
                {0pt}
                {-0.5*\morphismheight-\savedlengths}}
            \pgfusepath{stroke}
        \fi
        \ifconnectnwf
            \draw [forward arrow style] (-0.4*\overallwidth,0.5*\morphismheight)
                to (-0.4*\overallwidth,0.5*\morphismheight+\savedlengthnw);
        \fi
        \ifconnectnef
            \draw [forward arrow style] (0.4*\overallwidth,0.5*\morphismheight)
                to (0.4*\overallwidth,0.5*\morphismheight+\savedlengthne);
        \fi
        \ifconnectswf
            \draw [forward arrow style] (-0.4*\overallwidth,-0.5*\morphismheight-\savedlengthsw)
                to (-0.4*\overallwidth,-0.5*\morphismheight);
        \fi
        \ifconnectsef
            \draw [forward arrow style] (0.4*\overallwidth,-0.5*\morphismheight-\savedlengthse)
                to (0.4*\overallwidth,-0.5*\morphismheight);
        \fi
        \ifconnectnf
            \draw [forward arrow style] (0,0.5*\morphismheight)
                to (0,0.5*\morphismheight+\savedlengthn);
        \fi
        \ifconnectsf
            \draw [forward arrow style] (0,-0.5*\morphismheight-\savedlengths)
                to (0,-0.5*\morphismheight);
        \fi
        \ifconnectnwr
            \draw [reverse arrow style] (-0.4*\overallwidth,0.5*\morphismheight)
                to (-0.4*\overallwidth,0.5*\morphismheight+\savedlengthnw);
        \fi
        \ifconnectner
            \draw [reverse arrow style] (0.4*\overallwidth,0.5*\morphismheight)
                to (0.4*\overallwidth,0.5*\morphismheight+\savedlengthne);
        \fi
        \ifconnectswr
            \draw [reverse arrow style] (-0.4*\overallwidth,-0.5*\morphismheight-\savedlengthsw)
                to (-0.4*\overallwidth,-0.5*\morphismheight);
        \fi
        \ifconnectser
            \draw [reverse arrow style] (0.4*\overallwidth,-0.5*\morphismheight-\savedlengthse)
                to (0.4*\overallwidth,-0.5*\morphismheight);
        \fi
        \ifconnectnr
            \draw [reverse arrow style] (0,0.5*\morphismheight)
                to (0,0.5*\morphismheight+\savedlengthn);
        \fi
        \ifconnectsr
            \draw [reverse arrow style] (0,-0.5*\morphismheight-\savedlengths)
                to (0,-0.5*\morphismheight);
        \fi
    }
}
\pgfdeclareshape{swish right}
{
    \savedanchor\centerpoint
    {
        \pgf@x=0pt
        \pgf@y=0pt
    }
    \anchor{center}{\centerpoint}
    \anchorborder{\centerpoint}
    \anchor{north}
    {
        \pgf@x=\minimummorphismwidth
        \divide\pgf@x by 5
        \pgf@y=\morphismheight
        \divide\pgf@y by 2
        \advance\pgf@y by \connectheight
    }
    \anchor{south}
    {
        \pgf@x=-\minimummorphismwidth
        \divide\pgf@x by 5
        \pgf@y=-\morphismheight
        \divide\pgf@y by 2
        \advance\pgf@y by -\connectheight
    }
    \backgroundpath
    {
        \pgfsetstrokecolor{black}
        \pgfsetlinewidth{\thickness}
        \pgfpathmoveto{\pgfpoint
            {-0.2*\minimummorphismwidth}
            {-0.5*\morphismheight-\connectheight}}
        \pgfpathcurveto
            {\pgfpoint{-0.2*\minimummorphismwidth}{0pt}}
            {\pgfpoint{0.2*\minimummorphismwidth}{0pt}}
            {\pgfpoint
                {0.2*\minimummorphismwidth}
                {0.5*\morphismheight+\connectheight}}
        \pgfusepath{stroke}
    }
}
\pgfdeclareshape{swish left}
{
    \savedanchor\centerpoint
    {
        \pgf@x=0pt
        \pgf@y=0pt
    }
    \anchor{center}{\centerpoint}
    \anchorborder{\centerpoint}
    \anchor{north}
    {
        \pgf@x=-\minimummorphismwidth
        \divide\pgf@x by 5
        \pgf@y=\morphismheight
        \divide\pgf@y by 2
        \advance\pgf@y by \connectheight
    }
    \anchor{south}
    {
        \pgf@x=\minimummorphismwidth
        \divide\pgf@x by 5
        \pgf@y=-\morphismheight
        \divide\pgf@y by 2
        \advance\pgf@y by -\connectheight
    }
    \backgroundpath
    {
        \pgfsetstrokecolor{black}
        \pgfsetlinewidth{\thickness}
        \pgfpathmoveto{\pgfpoint
            {0.2*\minimummorphismwidth}
            {-0.5*\morphismheight-\connectheight}}
        \pgfpathcurveto
            {\pgfpoint{0.2*\minimummorphismwidth}{0pt}}
            {\pgfpoint{-0.2*\minimummorphismwidth}{0pt}}
            {\pgfpoint
                {-0.2*\minimummorphismwidth}
                {0.5*\morphismheight+\connectheight}}
        \pgfusepath{stroke}
    }
}
\pgfdeclareshape{state}
{
    \savedanchor\centerpoint
    {
        \pgf@x=0pt
        \pgf@y=0pt
    }
    \anchor{center}{\centerpoint}
    \anchorborder{\centerpoint}
    \saveddimen\overallwidth
    {
        \pgf@x=3\wd\pgfnodeparttextbox
        \ifdim\pgf@x<\minimumstatewidth
            \pgf@x=\minimumstatewidth
        \fi
    }
    \savedanchor{\upperrightcorner}
    {
        \pgf@x=.5\wd\pgfnodeparttextbox
        \pgf@y=.5\ht\pgfnodeparttextbox
        \advance\pgf@y by -.5\dp\pgfnodeparttextbox
    }
    \anchor{A}
    {
        \pgf@x=-\overallwidth
        \divide\pgf@x by 4
        \pgf@y=0pt
    }
    \anchor{B}
    {
        \pgf@x=\overallwidth
        \divide\pgf@x by 4
        \pgf@y=0pt
    }
    \anchor{text}
    {
        \upperrightcorner
        \pgf@x=-\pgf@x
        \ifhflip
            \pgf@y=-\pgf@y
            \advance\pgf@y by 0.4\stateheight
        \else
            \pgf@y=-\pgf@y
            \advance\pgf@y by -0.4\stateheight
        \fi
    }
    \backgroundpath
    {
        \pgfsetstrokecolor{black}
        \pgfsetlinewidth{\thickness}
        \pgfpathmoveto{\pgfpoint{-0.5*\overallwidth}{0}}
        \pgfpathlineto{\pgfpoint{0.5*\overallwidth}{0}}
        \ifhflip
            \pgfpathlineto{\pgfpoint{0}{\stateheight}}
        \else
            \pgfpathlineto{\pgfpoint{0}{-\stateheight}}
        \fi
        \pgfpathclose
        \ifblack
            \pgfsetfillcolor{black!50}
            \pgfusepath{fill,stroke}
        \else
            \pgfusepath{stroke}
        \fi
    }
}
\makeatother


\newcommand{\tinyhandle}[1][dot]{\raisebox{-2pt}{\ensuremath{\hspace{-3pt}\begin{pic}[scale=0.4,string]
        \node (0) at (0,0) {};
        \node[dot, inner sep=1.0pt] (1) at (0,0.3) {};
        \node[dot, inner sep=1.0pt] (2) at (0,0.7) {};
        \node (3) at (0,1) {};
        \draw (0.center) to (1.center);
        \draw (2.center) to (3.center);
        \draw[in=180, out=180, looseness=2] (1.center) to (2.center);
        \draw[in=0, out=0, looseness=2] (1.center) to (2.center);
\end{pic}\hspace{-1pt}}}}




\newcommand\ket[1]{\ensuremath{| #1 \rangle}}

\usepackage{color}

\ignore{
\usepackage{mathtools}
\tikzstyle{dot}=[inner sep=0.7mm,minimum width=0pt,minimum
height=0pt,fill=black,draw=black,shape=circle]
\tikzstyle{black dot}=[dot,fill=black]
\tikzstyle{white dot}=[dot,fill=white]
\tikzstyle{gray dot}=[dot,fill=gray!40!white]
}

\title{Quantum Algorithms for\\Compositional Natural Language Processing}
\author{William Zeng and Bob Coecke\thanks{This work was funded by Air Force Office of Scientific Research grant FA9550-14-1-0079 and The Rhodes Trust.}
  \institute{Rigetti Computing and University of Oxford}
  \email{zeng.will@gmail.com - bob.coecke@cs.ox.ac.uk}    
  }

\begin{document}
  
\maketitle

\begin{abstract}
        We propose a new application of quantum computing to the field of natural language processing.  Ongoing work in this field attempts to incorporate grammatical structure into algorithms that compute meaning.  In \cite{clark2008compositional, coecke2010mathematical}, Coecke, Sadrzadeh and Clark introduce such a model (the CSC model) based on tensor product composition. While this algorithm has many advantages, its implementation is hampered by the large classical computational resources that it requires. In this work we show how computational shortcomings of the  CSC approach could be resolved using quantum computation (possibly in addition to existing techniques for dimension reduction). We address the value of quantum RAM \cite{giovannetti2008quantum} for this model and extend an algorithm from Wiebe, Braun and Lloyd \cite{wiebe2014quantum} into a quantum algorithm to categorize  sentences in CSC. Our new algorithm demonstrates a quadratic speedup over classical methods under certain conditions.
\end{abstract}

\section{Introduction} 

As human computer interfaces become more advanced, natural language processing has grown to be a ubiquitous part of our world.  Its techniques allow computers to understand natural language to perform tasks like automatic summarization, machine translation, information retrieval, and sentiment analysis. Most approaches to this problem, such as Google's search, understand strings of separate words in a `bag of words' approach, ignoring any grammatical structure. This is certainly unsatisfactory, as we know that the meaning of a sentence is more than the meaning of its component words. Research in \textit{distributional compositional semantics} (DisCo) seeks to address this by incorporating grammatical structure. 

In \cite{clark2008compositional, coecke2010mathematical}, Coecke, Sadrzadeh and Clark introduce a DisCo model (the CSC model) based on tensor product composition that gives a grammatically informed algorithm to compute the meaning of sentences and phrases.  Interestingly, this algorithm was directly inspired by quantum theory, and more specifically, teleportation-like protocols \cite{teleling}. While this algorithm has many advantages, its implementation is hampered by the large classical computational resources that it requires.  This paper presents ways that quantum computers potentially could solve some of these problems, and doing so without the further assumptions and inaccuracies of the existing classical techniques for dimension reduction (e.g.~those based on \cite{plate1991holographic}). Hence making the CSC an attractive application for quantum computation.

We use the fact that quantum computation is naturally suited to managing high dimensional tensor product spaces. Recent literature has shown that quantum algorithms can thus provide advantages for machine learning \cite{wiebe2014quantum,rebentrost2014quantum}, inference \cite{low2014quantum}, and regression \cite{wiebe2012quantum,wang2014quantum} tasks.  We leverage these results in two particular ways:
\begin{enumerate}
\item We employ the scaling of quantum systems to address computational difficulties inherent in tensor-product based compositional semantics.
\item Shared structure makes algorithms in the CSC model especially amenable to quantum speedups.  We specify a CSC sentence similarity algorithm that, under certain conditions, gives quadratic speedups for natural language tasks.  
\end{enumerate}

In Section \ref{sec:disco}, we cover the basic framework of distributional compositional linguistics. Section 3 introduces the advantages of quantum representations for this framework.  Sections 4 and 5 propose a quantum algorithm with quadratic speedup for calculating sentence similarity within CSC. Section 6 briefly discusses the noise tolerance of these methods.

\section{Distributional Compositional Semantics and the CSS model}
\label{sec:disco}
In modern natural language processing, the \emph{vector space model} is widely used to compute the meaning of individual words \cite{schutze1998automatic}. In this approach we first specify a set of context words, for example the 2000 most common words in a given corpus.  These context words then form the basis of the vector space of word meanings in the following manner: for some given word, say  ``quantum", we look through a corpus and count the frequency with which each basis word    appears `near' to ``quantum". It is likely that we would have a high frequency for words like ``physics" and ``information" for example.  These frequencies then form the \emph{word vector} for ``quantum". Words represented by normalised vectors are similar if the inner product of their word vectors is close to one. These `bag of words' methods are typically referred to as \emph{distributional}.

As the same sentence rarely occurs repeatedly, this distributional technique cannot be directly extended to calculate the meaning of longer phrases, sentences, paragraphs, etc. Instead, \emph{compositional} semantics designs algorithms for deriving the meaning of a sentence or phrase from known meanings of component words,  taking into account types and grammatical structure \cite{lambek2008word}. The \emph{distributional compositional} semantic model (DisCo) combines both approaches to introduce grammatical form to the composition of word vectors \cite{clark2008compositional, coecke2010mathematical}. 

%

In this model, each grammatical type is assigned a tensor product space based on Lambek's pregroup grammar \cite{lambek2008word} or combinatorial categorical grammar \cite{hermann2013role}. The meaning of nouns is, for example, calculated as in the distributional case, and we label their vector space $\mathcal{N}$.  A transitive verb's meaning is then, following the grammar, a vector in the space $\mathcal{N}\otimes \mathcal{S} \otimes \mathcal{N}$, where $\mathcal{S}$ is the meaning space for sentences \cite{clark2008compositional, coecke2010mathematical}. An intuition for this is that the transitive verb takes a subject noun as a left argument and an object noun as a right argument. An adjective lives in the space $\mathcal{N}\otimes\mathcal{N}$.

We use a diagrammatic notation for vectors, tensors, and linear maps as is common for both CSC and quantum information. Here vertical composition (read top to bottom) represents composition of linear maps and horizontal composition represents the tensor product:
\[
\begin{tabular}{cc}
$\overrightarrow{\mbox{Mary}}\in \mathcal{N} :=    
\begin{aligned}
\begin{tikzpicture}[scale=0.5]
                \node  (20) at (0, 0) {$\mathcal{N}$};
                \node  (31) at (0, 2.5) {Mary};
                \node (1) [state, hflip] at (0,1) {};
                \draw [style = thick]  (0,0.5) to (1.center);
\end{tikzpicture}
\end{aligned}$ & \qquad
$f:\mathcal{N}\to\mathcal{N} :=  
\begin{aligned}
\begin{tikzpicture}[scale=0.5]
                \node  (20) at (0, -1.5) {$\mathcal{M}$};
                \node  (20) at (0, 1.5) {$\mathcal{N}$};                
                \node (1) [morphism] at (0,0) {f};
                \draw [style = thick]  (0,1) to (0,0.5);   
                \draw [style = thick]  (0,-1) to (0,-0.5);
\end{tikzpicture}
\end{aligned}$ \\
$\overrightarrow{\mbox{likes}}\in \mathcal{N}\otimes\mathcal{S}\otimes \mathcal{N} :=  
\begin{aligned}
\begin{tikzpicture}[scale=0.5]
                \node  (3) at (-1, 1) {};
                \node  (4) at (2, 1) {};
                \node  (5) at (0.5, 2.25) {};
                \node  (10) at (-0.25, 1) {};
                \node  (11) at (0.5, 1) {};
                \node  (12) at (1.25, 1) {};
                \node  (15) at (-0.25, 0.5) {};
                \node  (16) at (0.5, 0.5) {};
                \node  (17) at (1.25, 0.5) {};
                \node  (20) at (-0.25, 0) {$\mathcal{N}$};
                \node  (21) at (0.5, 0) {$\mathcal{S}$};
                \node  (22) at (1.25, 0) {$\mathcal{N}$};    
                \node  (31) at (0.5, 2.75) {likes};
                \draw [style = thick]  (3.center) to (4.center);
                \draw [style = thick] (4.center) to (5.center);
                \draw [style = thick] (3.center) to (5.center);
                \draw [style = thick] (10.center) to (15.center);
                \draw [style = thick] (11.center) to (16.center);
                \draw [style = thick] (12.center) to (17.center);
\end{tikzpicture}
\end{aligned}$ 
 & \qquad
$ \begin{aligned}
 \begin{tikzpicture}[yscale=1]
                \node (0) at (0, 1) {};
                \node (2) at (-1, 0) {$g\circ f = $};
                \node (1) at (0.5, 1) {$\mathcal{N}$};
                \node (3) at (0, -1) {};
                \node [style=morphism] (4) at (0, -0.5) {$g$};
                \node (5) at (0.5, -1) {$\mathcal{N}$};
                \node (6) at (0.5, 0) {$\mathcal{M}$};
                \node [style=morphism] (7) at (0, 0.5) {$f$};
                \draw (3.center) to (0, -0.75);
                \draw (0, -0.25) to (0, 0.25);
                \draw (0, 0.75) to (0.center);
    \end{tikzpicture}
    \end{aligned}$ \\
$\overrightarrow{\mbox{Mary}}\otimes \overrightarrow{\mbox{likes}} :=
\begin{aligned}
\begin{tikzpicture}[scale=0.5]
                \node  (20) at (0, 0) {$\mathcal{N}$};
                \node  (31) at (0, 2.5) {Mary};
                \node (1) [state, hflip] at (0,1) {};
                \draw [style = thick]  (0,0.5) to (1.center);
                \node (2) [state, hflip, xscale=1.5, yscale=1] at (3,1) {};
                \node  (32) at (3, 2.5) {likes};
                \node  (20) at (2.25, 0) {$\mathcal{N}$};                                \node  (20) at (3, 0) {$\mathcal{S}$};
                \node  (20) at (3.75, 0) {$\mathcal{N}$};
                \draw [style = thick]  (3,0.5) to (2.center);
                \draw [style = thick]  (3.75,0.5) to (3.75,1);                            \draw [style = thick]  (2.25,0.5) to (2.25,1);    
\end{tikzpicture}
\end{aligned} $ & \qquad\qquad
$\sum\limits_i \langle ii| :=
\begin{aligned}
\begin{tikzpicture}[scale=0.7]
                \node  (0) at (-1, 1.5) {$\mathcal{N}$};
                \node  (1) at (1, 1.5) {$\mathcal{N}$};
                \node  (0) at (-1, 1) {};
                \node  (1) at (1, 1) {};
                \node  (2) at (0, 0) {};
                \draw [bend right=45, looseness=1.00] (0.center) to (2.center);  
                \draw [bend right=45, looseness=1.00] (2.center) to (1.center);               
\end{tikzpicture}  
\end{aligned}$
\end{tabular}
\]
where $f:\mathcal{N}\to\mathcal{M}$ and $g:\mathcal{M}\to\mathcal{N}$ are linear maps and the linear map $\sum_i\langle ii|$ sums over all the basis vectors of $\mathcal{N}$ and is called a \emph{cap}.

Given a well-typed sentence with meaning vectors $\vec{w_j}$ for each of $k$ words, the classical CSC algorithm for calculating a sentence's meaning is \cite{teleling}:  
\begin{enumerate}
\item Compute the tensor product $\overrightarrow{\mbox{words}}=\vec{w_0}\otimes...\otimes \vec{w_k}$ in the order that each word appears in the sentence.

\item Construct a linear map that represents the grammatical type reduction by ``wiring up" the vectors with the appropriate caps.  For example, given that the pregroup type reduction of a noun/transitive verb/noun sentence is:
\[
n \cdot {}^{-1}n \cdot s \cdot n^{-1} \cdot n\leq 1\cdot s\cdot 1 \leq s
\]
 this linear map is:
\[
\sum\limits_i \langle ii| \ \, \otimes \ {\rm id} \ \otimes\ \, \sum\limits_i \langle ii|
\]
\item Compute the meaning of the sentence by applying the linear map to the $\overrightarrow{\mbox{words}}$ vector. This results in a vector in $\mathcal{S}$ which corresponds to the meaning of the sentence.  In the above example the result corresponds to the diagram:
\pgfdeclarelayer{edgelayer}
\pgfdeclarelayer{nodelayer}
\pgfsetlayers{background,edgelayer,nodelayer,main}
\tikzstyle{none}=[inner sep=0mm]
\tikzstyle{every loop}=[]
\tikzstyle{mark coordinate}=[inner sep=0pt,outer sep=0pt,minimum size=3pt,fill=black,circle]

\begin{equation}
\label{eq:svo}
\begin{aligned}
\begin{tikzpicture}[scale=0.5]
        \begin{pgfonlayer}{nodelayer}     
                \node [style=none] (0) at (-3.5, 1) {};
                \node [style=none] (1) at (-2, 1) {};
                \node [style=none] (2) at (-2.75, 2) {};
                \node [style=none] (3) at (-1, 1) {};
                \node [style=none] (4) at (2, 1) {};
                \node [style=none] (5) at (0.5, 2.25) {};
                \node [style=none] (6) at (4.5, 1) {};
                \node [style=none] (7) at (3.75, 2) {};
                \node [style=none] (8) at (3, 1) {};
                \node [style=none] (9) at (-2.75, 1) {};
                \node [style=none] (10) at (-0.25, 1) {};
                \node [style=none] (11) at (0.5, 1) {};
                \node [style=none] (12) at (1.25, 1) {};
                \node [style=none] (13) at (3.75, 1) {};
                \node [style=none] (14) at (-2.75, 0.5) {};
                \node [style=none] (15) at (-0.25, 0.5) {};
                \node [style=none] (16) at (0.5, 0.5) {};
                \node [style=none] (17) at (1.25, 0.5) {};
                \node [style=none] (18) at (3.75, 0.5) {};
                \node [style=none] (19) at (-2.75, 0) {$\mathcal{N}$};
                \node [style=none] (20) at (-0.25, 0) {$\mathcal{N}$};
                \node [style=none] (21) at (0.5, 0) {$\mathcal{S}$};
                \node [style=none] (22) at (1.25, 0) {$\mathcal{N}$};
                \node [style=none] (23) at (3.75, 0) {$\mathcal{N}$};
                \node [style=none] (24) at (-2.75, -0.5) {};
                \node [style=none] (25) at (-0.25, -0.5) {};
                \node [style=none] (26) at (0.5, -0.5) {};
                \node [style=none] (27) at (1.25, -0.5) {};
                \node [style=none] (28) at (3.75, -0.5) {};
                \node [style=none] (29) at (0.5, -1.75) {};
                \node [style=none] (30) at (-2.75, 2.75) {Mary};
                \node [style=none] (31) at (0.5, 2.75) {likes};
                \node [style=none] (32) at (3.75, 2.75) {words.};
        \end{pgfonlayer}
        \begin{pgfonlayer}{edgelayer}
                \draw [style = thick] (0.center) to (1.center);
                \draw [style = thick] (1.center) to (2.center);
                \draw [style = thick] (2.center) to (0.center);
                \draw [style = thick]  (3.center) to (4.center);
                \draw [style = thick] (8.center) to (6.center);
                \draw [style = thick] (6.center) to (7.center);
                \draw [style = thick] (4.center) to (5.center);
                \draw [style = thick] (3.center) to (5.center);
                \draw [style = thick] (8.center) to (7.center);
                \draw [style = thick] (9.center) to (14.center);
                \draw [style = thick] (10.center) to (15.center);
                \draw [style = thick] (11.center) to (16.center);
                \draw [style = thick] (12.center) to (17.center);
                \draw [style = thick] (13.center) to (18.center);
                \draw [thick, bend right=90, looseness=1.25] (24.center) to (25.center);
                \draw [thick, bend right=90, looseness=1.25] (27.center) to (28.center);
                \draw [style = thick] (26.center) to (29.center);
        \end{pgfonlayer}
\end{tikzpicture}
\end{aligned}
\end{equation}
\end{enumerate}

\noindent We refer the reader to \cite{coecke2010mathematical} for a fuller description of the distributional compositional model and to \cite{experimental-catcompdist, KartSadr} and \cite{kartsaklis2012unified} for experimental implementations.

These models suggest a promising approach to incorporate grammatical structure with vector space models of meaning, yet they come with the computational challenges of large tensor product spaces \cite{GrefenstetteThesis2013}. While there do exist some classical approaches to avoid the calculation of the full tensor product, such as the holographic reduced representations from \cite{plate1991holographic} or the use of dimensionality reduction \cite{polajnar2013learning}, these approaches introduce further assumptions and inaccuracies.  For this reason, ameliorating the large computational costs introduced these large spaces through quantum computation is of particular interest.

\section{Quantum computation for the CSC model}

The most immediate advantage for quantum implementations of the CSC model is gained by storing meaning vectors in quantum systems.  For $\alpha,\beta\in \mathbb{C}$ a two-level quantum system, a qubit, is defined by:
\begin{center}
  \begin{tabular}{cc}
   Qubit & Qubits  \\
  $ \begin{array}{lcl}
        \ket{\psi} &=& \alpha\ket{0}+\beta\ket{1} \\[0.2cm]
        &=&\alpha\left(\begin{array}{c} 1 \\ 0 \end{array}\right)
            +\beta\left(\begin{array}{c} 0 \\ 1 \end{array}\right)
   \end{array} $
   &\qquad\qquad
  $ \ket{\psi_1}\otimes\ket{\psi_2} = \left(\begin{array}{c} \alpha_1\alpha_2 \\ \alpha_1\beta_2\\\beta_1\alpha_2\\\beta_2\beta_2 \end{array}\right)$
   \\  
  \end{tabular}
\end{center}
and composition of qubits is given by the tensor product.  This leaves each $n$-qubit system with $2^n$ degrees of freedom, indicating that $N$-dimensional classical vectors can be stored in $\log_2 N$ qubits.
Consider a corpus whose word-meaning space is given by a basis of the 2,000 most common words. Even if we make the simplifying assumption that the sentence-meaning space is no larger than the word-meaning space we obtain the dramatic improvements details in Table 1.

\begin{table}[ht]
\label{tab:space}
\begin{center}
\begin{tabular}{|c|c|c|}\hline
 & One Transitive Verb & 10k tr. verbs \\\hline
 Classical & $8\times 10^{9}$ bits & $8\times 10^{13}$ bits \\\hline
 Quantum & 33 qubits & 47 qubits \\\hline
\end{tabular}
\end{center}
\caption{Rough comparisons of the storage necessary for verbs in quantum and classical frameworks.}
\end{table}

Further, these word meanings can be imported into a ``bucket bridgade" quantum RAM that allows them to retrieved with a complexity linear in the number of qubits \cite{giovannetti2008quantum}. The general point is that because quantum systems compose via the tensor product they are natural choices to store complex types and sentences that have the same compositional structure. We can then employ quantum algorithms on for natural language classification as presented in Section \ref{sec:discoQalg}.

\section{A quantum algorithm for the closest vector problem}
\label{sec:qalg}

Many tasks in computational linguistics such as clustering, text classification, phrase/word similarity, and sentiment analysis rely on computations that determine the closest vector to $\vec{s}$ out of some set of $N$-dimensional vectors $\{\vec{v}_0,\vec{v}_1,...\vec{v}_{M-1}\}$. In clustering algorithms, for example, the set of vectors could be either the centroids of different clusters or the full set of training vectors, as in nearest neighbor clustering algorithms. Either the inner-product distance or Euclidean distance can be used. We will assume that all vectors are $N$-dimensional.

\begin{definition}
Given vector $\vec{s}$ and a set of $M$ vectors $U = \{\vec{v}_0,\vec{v}_1,...\vec{v}_{M-1}\}$ the \emph{closest vector problem} asks one to determine which $v_j$ has the smallest inner product distance with $\vec{s}$.
\end{definition}

Direct calculation of the smallest vector would have complexity $\mathcal{O}(MN)$.  In \cite{wiebe2014quantum} the authors introduce a quantum algorithm for this problem that, under certain conditions, demonstrate quadratic speedups over direct calculation and polynomial speedups over Monte-Carlo methods. Some proof of principle experiments have then demonstrated clustering of eight-dimensional vectors, based on these techniques, on a small photonic quantum computer \cite{cai2015entanglement}. This algorithm requires the following assumptions, where we write $v_{ji}$ for the $i^{\tiny\mbox{th}}$ entry of the $j^{\tiny\mbox{th}}$ vector:
\begin{enumerate}
\item Vectors $\vec{v_j}$ and $\vec{s}$ are $d$-sparse, with no more than $d$ non-zero entries.    
\item The relevant vectors are stored in some kind of quantum memory so that the quantum computer can access their entries with the two oracles of the form:
\begin{equation}
\begin{array}{c}
\mathcal{O}\ket{j}\ket{i}\ket{0}:=\ket{j}\ket{i}\ket{v_{ji}}, \\
\mathcal{F}\ket{j}\ket{l}:= \ket{j}\ket{f(j,l)},
\end{array}
\end{equation}
where $f(j,l)$ is the location of the $l^{\tiny\mbox{th}}$ non-zero entry of $v_j$.  It is against these memory access oracles that the performance of our algorithm will be measured.

\item $\max(|v_{ji}|^2)\le r_{\small\mbox{max}}$ for some known constant $r_{\small\mbox{max}}$.  

\item All the vectors are normalized.
 
\end{enumerate}

Under these assumptions we are able to run a quantum nearest-neighbor algorithm with complexity characterized by the following theorem:

\begin{theorem}[\cite{wiebe2014quantum}]
We can find $\max_j|\langle s~|~v_j\rangle|^2$ with success probability $1-\delta$ and error $\epsilon$ using an expected number of $\mathcal{O}$ and $\mathcal{F}$ queries that is bounded above by
\begin{equation}
1080\sqrt{M}\left\lceil\frac{4\pi(\pi+1)d^2r^4_{\tiny\mbox{max}}}{\epsilon}\right\rceil\left\lceil \frac{\ln\left(81M(\ln(M)+\gamma)\right)/\delta_0}{2(8/\pi^2-1/2)^2}\right\rceil,
\end{equation}
where $\gamma\approx 0.5772$ is Euler's constant.
\end{theorem}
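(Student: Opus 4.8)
The plan is to assemble the bound out of two standard quantum subroutines --- amplitude estimation, used to estimate each squared overlap $|\langle s\,|\,v_j\rangle|^2$ from the memory oracles, and the D\"urr--H\o yer quantum maximum-finding routine, used to locate the largest of the $M$ estimates --- and then to track constants and failure probabilities carefully enough to reach the displayed expression. None of the individual pieces is hard; the content is in how they are glued together.

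First I would construct, for each index $j$, a state-preparation unitary $A_j$ that makes $\mathcal{O}(1)$ calls to $\mathcal{O}$ and $\mathcal{F}$ together with some reversible arithmetic, and whose flagged amplitude encodes an affine function of $|\langle s\,|\,v_j\rangle|^2$. Here the $d$-sparsity of $\vec s$ and $\vec v_j$ confines the superposition to the $d$ coordinates located by $\mathcal{F}$; the bound $\max(|v_{ji}|^2)\le r_{\max}$ controls the rotation angles used to load each entry $v_{ji}$ via $\mathcal{O}$; and normalisation keeps the flagged probability in $[0,1]$. Running amplitude estimation on $A_j$ with $k$ Grover iterates then returns an estimate of that probability --- hence of $|\langle s\,|\,v_j\rangle|^2$ --- with additive error of order $\sqrt{p_j}/k + 1/k^2$ and success probability at least $8/\pi^2$. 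Propagating the sparsity and magnitude bounds through this error estimate and solving for the $k$ needed to reach accuracy $\epsilon$ gives $k=\mathcal{O}(d^2 r_{\max}^4/\epsilon)$, which is the origin of the factor $\lceil 4\pi(\pi+1)d^2 r^4_{\max}/\epsilon\rceil$.

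Next I would feed these noisy overlap estimates, as a comparison oracle, into the D\"urr--H\o yer algorithm: it keeps a running ``best index so far'', repeatedly runs a Grover search for an index whose overlap beats the current threshold, and updates the threshold whenever one is found. The standard analysis bounds the \emph{expected} number of threshold updates by a harmonic-type quantity of order $\ln M + \gamma$, and each update costs $\mathcal{O}(\sqrt M)$ Grover iterations against the comparison oracle; collecting these with the amplitude-estimation constants produces the $1080\sqrt M$ factor. Since each comparison is itself correct only with probability $\ge 8/\pi^2$, I would wrap every comparison in a majority vote over $\mathcal{O}\!\big(\ln(M(\ln M+\gamma))/\delta_0\big)$ repetitions; a Hoeffding/Chernoff estimate with bias $8/\pi^2-1/2$ then drives the per-comparison failure probability below $\delta_0$, which is exactly the last ceiling with its denominator $2(8/\pi^2-1/2)^2$. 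Choosing $\delta_0$ so that a union bound over all comparisons leaves total failure probability at most $\delta$ finishes the count.

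The main obstacle --- and the reason for the somewhat baroque logarithmic factor containing $81M(\ln M+\gamma)$ --- is that D\"urr--H\o yer is itself a randomised algorithm whose runtime is a random variable, and its internal Grover searches are driven by an \emph{imperfect} marking oracle: an erroneous comparison can not only miss a better index but wrongly accept a worse one, permanently corrupting the running maximum. I would therefore need the robust, bounded-error-oracle version of quantum search and maximum-finding, and I would have to take the union bound not over a fixed number of comparisons but over the expected number while controlling the tail of that distribution, and then combine this with the independent failure probability of amplitude estimation. Getting these two sources of randomness --- the measurement outcomes of amplitude estimation and the internal coin flips of D\"urr--H\o yer --- to cooperate cleanly, rather than controlling any single estimate, is where the real work lies; once that is done the stated bound follows by collecting constants.
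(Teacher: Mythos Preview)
The paper does not prove this theorem at all: it is quoted verbatim from \cite{wiebe2014quantum} and used as a black box, so there is no ``paper's own proof'' to compare against. Your sketch --- amplitude estimation to estimate each $|\langle s|v_j\rangle|^2$ from the sparse-access oracles, majority voting to boost per-comparison reliability, and D\"urr--H\o yer maximum finding over the $M$ indices --- is in fact the architecture of the original Wiebe--Braun--Lloyd argument, and your identification of the provenance of each of the three displayed factors (the $\sqrt{M}$ from minimum finding, the $d^2r_{\max}^4/\epsilon$ from the amplitude-estimation iteration count under the sparsity and magnitude assumptions, and the Hoeffding-type log term from repetition with bias $8/\pi^2-1/2$) is accurate. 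For the purposes of the present paper, though, no proof is required or supplied; a one-line citation would suffice.
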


It is clear that for this quantum algorithm there is a quadratic improvement in scaling with $M$, the number of training vectors. While the dimension of the vectors $N$ is not explicitly included, in general it is implicitly there through the dependence on $d$.  It is also clear that if $r_{\small\mbox{max}}\propto 1/\sqrt{d}$, then the algorithm's dependence on both $d$ and $N$ drops out. As the vectors are normalized, this can be expected to occur if the vectors have sparsity that grows linearly with their size \cite{wiebe2014quantum}. The authors further assume that for ``typical" cases the error $\epsilon$ scales as $\Theta(1/\sqrt{N})$ so that the runtime for the quantum inner-product algorithm becomes $\mathcal{O}\left(\sqrt{NM}\ln(M)d^2r^4_{\small\mbox{max}}\right)$.\footnote{
This is argued for in Appendix D of \cite{wiebe2014quantum} for a ``typical" case where the vectors are uniformly distributed over the unit sphere.} This result shows a quadratic improvement over direct calculations and also shows improvement over Monte Carlo methods, whose complexity is $\mathcal{O}\left(NMd^2r^4_{\small\mbox{max}}\right)$. These comparisons are summarized in Table 2.

\begin{table}[ht]
\label{tab:comp}
\begin{center}
\begin{tabular}{|c|c|c|}\hline
 Type & Typical cases & Atypical cases    \\ \hline  
 Classical Direct & $\mathcal{O}(NM)$ & $\mathcal{O}(NM)$ \\
 Classical Monte Carlo & $\mathcal{O}\left(NMd^2r_{\small\mbox{max}}^4\right)$ & $\mathcal{O}\left(Md^2r_{\small\mbox{max}}^4/\epsilon^2\right)$ \\
 Quantum & $\mathcal{O}\left(\sqrt{NM}\log(M)d^2r^4_{\small\mbox{max}}\right)$ &  $\mathcal{O}\left(\sqrt{M}\log(M)d^2r^4_{\small\mbox{max}}/\epsilon\right)$ \\\hline
\end{tabular}
\end{center}
\caption{Complexity comparisons for different closest vector algorithms. Adapted from \cite{wiebe2014quantum}.}
\end{table}

 In the following section we adapt this algorithm to sentence similarity calculations in the distributional compositional framework.

\section{A quantum algorithm for CSC sentence similarity}
\label{sec:discoQalg}

The quantum algorithm from Section \ref{sec:qalg} can be used to advantage in natural language processing tasks however, the computational overhead of the CSC approach would dwarf this algorithm's advantages if it were naively applied.  
Throughout this section we will assume both that $r_{\small\mbox{max}}\propto 1/\sqrt{d}$ and that the accuracy necessary for our calculation means $\epsilon$ scales as $\Theta(1/\sqrt{N})$. Consider the example of probabilistically classifying the meaning of a  simple sentence. We illustrate this example with a noun-verb-noun sentence. The meaning of the nouns are vectors in an $N$-dimensional vector space and the meaning of the verb is a vector in the space $\mathcal{N}\otimes \mathcal{S} \otimes \mathcal{N}$. We represent a derivation of the meaning of the full sentence with the following map:

\pgfdeclarelayer{edgelayer}
\pgfdeclarelayer{nodelayer}
\pgfsetlayers{background,edgelayer,nodelayer,main}
\tikzstyle{none}=[inner sep=0mm]
\tikzstyle{every loop}=[]
\tikzstyle{mark coordinate}=[inner sep=0pt,outer sep=0pt,minimum size=3pt,fill=black,circle]  

\[
\label{eqn:phi}
\begin{aligned}
\begin{tikzpicture}[scale=0.6]
        \begin{pgfonlayer}{nodelayer}
                \node [style=none] (0) at (-5.5, 0) {\ket{\phi}};
                \node [style=none] (0) at (-4.5, 0) {=};       
                \node [style=none] (0) at (-3.5, 1) {};
                \node [style=none] (1) at (-2, 1) {};
                \node [style=none] (2) at (-2.75, 2) {};
                \node [style=none] (3) at (-1, 1) {};
                \node [style=none] (4) at (2, 1) {};
                \node [style=none] (5) at (0.5, 2.25) {};
                \node [style=none] (6) at (4.5, 1) {};
                \node [style=none] (7) at (3.75, 2) {};
                \node [style=none] (8) at (3, 1) {};
                \node [style=none] (9) at (-2.75, 1) {};
                \node [style=none] (10) at (-0.25, 1) {};
                \node [style=none] (11) at (0.5, 1) {};
                \node [style=none] (12) at (1.25, 1) {};
                \node [style=none] (13) at (3.75, 1) {};
                \node [style=none] (14) at (-2.75, 0.5) {};
                \node [style=none] (15) at (-0.25, 0.5) {};
                \node [style=none] (16) at (0.5, 0.5) {};
                \node [style=none] (17) at (1.25, 0.5) {};
                \node [style=none] (18) at (3.75, 0.5) {};
                \node [style=none] (19) at (-2.75, 0) {$\mathcal{N}$};
                \node [style=none] (20) at (-0.25, 0) {$\mathcal{N}$};
                \node [style=none] (21) at (0.5, 0) {$\mathcal{S}$};
                \node [style=none] (22) at (1.25, 0) {$\mathcal{N}$};
                \node [style=none] (23) at (3.75, 0) {$\mathcal{N}$};
                \node [style=none] (24) at (-2.75, -0.5) {};
                \node [style=none] (25) at (-0.25, -0.5) {};
                \node [style=none] (26) at (0.5, -0.5) {};
                \node [style=none] (27) at (1.25, -0.5) {};
                \node [style=none] (28) at (3.75, -0.5) {};
                \node [style=none] (29) at (0.5, -1.75) {};
                \node [style=none] (30) at (-2.75, 2.75) {kids};
                \node [style=none] (31) at (0.5, 2.75) {play};
                \node [style=none] (32) at (3.75, 2.75) {football};
        \end{pgfonlayer}
        \begin{pgfonlayer}{edgelayer}
                \draw [style = thick] (0.center) to (1.center);
                \draw [style = thick] (1.center) to (2.center);
                \draw [style = thick] (2.center) to (0.center);
                \draw [style = thick]  (3.center) to (4.center);
                \draw [style = thick] (8.center) to (6.center);
                \draw [style = thick] (6.center) to (7.center);
                \draw [style = thick] (4.center) to (5.center);  
                \draw [style = thick] (3.center) to (5.center);
                \draw [style = thick] (8.center) to (7.center);
                \draw [style = thick] (9.center) to (14.center);
                \draw [style = thick] (10.center) to (15.center);
                \draw [style = thick] (11.center) to (16.center);
                \draw [style = thick] (12.center) to (17.center);
                \draw [style = thick] (13.center) to (18.center);
                \draw [thick, bend right=90, looseness=1.25] (24.center) to (25.center);
                \draw [thick, bend right=90, looseness=1.25] (27.center) to (28.center);
                \draw [style = thick] (26.center) to (29.center);
        \end{pgfonlayer}
\end{tikzpicture}
\end{aligned}
\]

From now on, we will take sentences to exist in the same meaning space as words, i.e. $\mathcal{S}\simeq \mathcal{N}$.

\begin{definition}
For meaning vector $\vec{s}$ and $M$ sets of meaning vectors, a \emph{classification task} assigns $\vec{s}$ to the set containing the nearest-neighbor of $\vec{s}$.
\end{definition}

An example task would be to determine if a sentence is about sports or politics or if a sentence expresses agreement or disagreement. 
If, to present a simplified example, we take each cluster to only contain a single vector ($\vec{v}$ and $\vec{w}$ respectively) then the sentence would be classified by computing 
\begin{equation}
\label{eq:exampleCalcs}
\begin{aligned}
\begin{tikzpicture}[scale=0.5]
        \begin{pgfonlayer}{nodelayer}
                \node [style=none] (0) at (-3.5, 1) {};
                \node [style=none] (1) at (-2, 1) {};
                \node [style=none] (2) at (-2.75, 2) {};
                \node [style=none] (3) at (-1, 1) {};
                \node [style=none] (4) at (2, 1) {};
                \node [style=none] (5) at (0.5, 2.25) {};
                \node [style=none] (6) at (4.5, 1) {};
                \node [style=none] (7) at (3.75, 2) {};
                \node [style=none] (8) at (3, 1) {};
                \node [style=none] (9) at (-2.75, 1) {};
                \node [style=none] (10) at (-0.25, 1) {};
                \node [style=none] (11) at (0.5, 1) {};
                \node [style=none] (12) at (1.25, 1) {};
                \node [style=none] (13) at (3.75, 1) {};
                \node [style=none] (14) at (-2.75, 0.5) {};
                \node [style=none] (15) at (-0.25, 0.5) {};
                \node [style=none] (16) at (0.5, 0.5) {};
                \node [style=none] (17) at (1.25, 0.5) {};
                \node [style=none] (18) at (3.75, 0.5) {};
                \node [style=none] (19) at (-2.75, 0) {$\mathcal{N}$};
                \node [style=none] (20) at (-0.25, 0) {$\mathcal{N}$};
                \node [style=none] (21) at (0.5, 0) {$\mathcal{S}$};
                \node [style=none] (22) at (1.25, 0) {$\mathcal{N}$};
                \node [style=none] (23) at (3.75, 0) {$\mathcal{N}$};
                \node [style=none] (24) at (-2.75, -0.5) {};
                \node [style=none] (25) at (-0.25, -0.5) {};
                \node [style=none] (26) at (0.5, -0.5) {};
                \node [style=none] (27) at (1.25, -0.5) {};
                \node [style=none] (28) at (3.75, -0.5) {};
                \node [style=none] (29) at (0.5, -1.75) {};
                \node [style=none] (30) at (-2.75, 2.75) {kids};
                \node [style=none] (31) at (0.5, 2.75) {play};
                \node [style=none] (32) at (3.75, 2.75) {football};
                \node [style=state] (33) at (0.5, -1.75) {$\vec{v}$};
                \node [style=none] (34) at (0.6, -3.25) {sports};
        \end{pgfonlayer}
        \begin{pgfonlayer}{edgelayer}
                \draw [style = thick] (0.center) to (1.center);
                \draw [style = thick] (1.center) to (2.center);
                \draw [style = thick] (2.center) to (0.center);
                \draw [style = thick]  (3.center) to (4.center);
                \draw [style = thick] (8.center) to (6.center);
                \draw [style = thick] (6.center) to (7.center);
                \draw [style = thick] (4.center) to (5.center);
                \draw [style = thick] (3.center) to (5.center);
                \draw [style = thick] (8.center) to (7.center);
                \draw [style = thick] (9.center) to (14.center);
                \draw [style = thick] (10.center) to (15.center);
                \draw [style = thick] (11.center) to (16.center);
                \draw [style = thick] (12.center) to (17.center);
                \draw [style = thick] (13.center) to (18.center);
                \draw [thick, bend right=90, looseness=1.25] (24.center) to (25.center);
                \draw [thick, bend right=90, looseness=1.25] (27.center) to (28.center);
                \draw [style = thick] (26.center) to (29.center);
        \end{pgfonlayer}
\end{tikzpicture}
\end{aligned}  
 \qquad \mbox{and} \qquad 
\begin{aligned}
\begin{tikzpicture}[scale=0.5]
        \begin{pgfonlayer}{nodelayer}
                \node [style=none] (0) at (-3.5, 1) {};
                \node [style=none] (1) at (-2, 1) {};
                \node [style=none] (2) at (-2.75, 2) {};
                \node [style=none] (3) at (-1, 1) {};
                \node [style=none] (4) at (2, 1) {};
                \node [style=none] (5) at (0.5, 2.25) {};
                \node [style=none] (6) at (4.5, 1) {};
                \node [style=none] (7) at (3.75, 2) {};
                \node [style=none] (8) at (3, 1) {};
                \node [style=none] (9) at (-2.75, 1) {};
                \node [style=none] (10) at (-0.25, 1) {};
                \node [style=none] (11) at (0.5, 1) {};
                \node [style=none] (12) at (1.25, 1) {};
                \node [style=none] (13) at (3.75, 1) {};
                \node [style=none] (14) at (-2.75, 0.5) {};
                \node [style=none] (15) at (-0.25, 0.5) {};
                \node [style=none] (16) at (0.5, 0.5) {};
                \node [style=none] (17) at (1.25, 0.5) {};
                \node [style=none] (18) at (3.75, 0.5) {};
                \node [style=none] (19) at (-2.75, 0) {$\mathcal{N}$};
                \node [style=none] (20) at (-0.25, 0) {$\mathcal{N}$};
                \node [style=none] (21) at (0.5, 0) {$\mathcal{S}$};
                \node [style=none] (22) at (1.25, 0) {$\mathcal{N}$};
                \node [style=none] (23) at (3.75, 0) {$\mathcal{N}$};
                \node [style=none] (24) at (-2.75, -0.5) {};
                \node [style=none] (25) at (-0.25, -0.5) {};
                \node [style=none] (26) at (0.5, -0.5) {};
                \node [style=none] (27) at (1.25, -0.5) {};
                \node [style=none] (28) at (3.75, -0.5) {};
                \node [style=none] (29) at (0.5, -1.75) {};
                \node [style=none] (30) at (-2.75, 2.75) {kids};
                \node [style=none] (31) at (0.5, 2.75) {play};
                \node [style=none] (32) at (3.75, 2.75) {football};
                \node [style=state] (33) at (0.5, -1.75) {$\vec{w}$};
                \node [style=none] (34) at (0.6, -3.25) {politics};
        \end{pgfonlayer}
        \begin{pgfonlayer}{edgelayer}
                \draw [style = thick] (0.center) to (1.center);
                \draw [style = thick] (1.center) to (2.center);
                \draw [style = thick] (2.center) to (0.center);
                \draw [style = thick]  (3.center) to (4.center);
                \draw [style = thick] (8.center) to (6.center);
                \draw [style = thick] (6.center) to (7.center);
                \draw [style = thick] (4.center) to (5.center);
                \draw [style = thick] (3.center) to (5.center);
                \draw [style = thick] (8.center) to (7.center);
                \draw [style = thick] (9.center) to (14.center);
                \draw [style = thick] (10.center) to (15.center);
                \draw [style = thick] (11.center) to (16.center);
                \draw [style = thick] (12.center) to (17.center);
                \draw [style = thick] (13.center) to (18.center);
                \draw [thick, bend right=90, looseness=1.25] (24.center) to (25.center);
                \draw [thick, bend right=90, looseness=1.25] (27.center) to (28.center);
                \draw [style = thick] (26.center) to (29.center);
        \end{pgfonlayer}
\end{tikzpicture}
\end{aligned}
\end{equation}  
and assigning the sentence to the one of smaller value.  We would proceed with two steps:

\begin{enumerate}
\item Compute the derivation of $\ket{\psi}$, which, by classical direct calculation, takes  $\mathcal{O}(3N)$ operations. 

\item See which of $\vec{v}$ and $\vec{w}$ is closest to $\ket{\phi}$. This is an instance of the closest vector problem where $\vec{s} = \ket{\phi}$, $M=2$, and $U = \{\vec{v},\vec{w}\}$. With direct calculation or Monte Carlo the second step requires\footnote{If we assume the appropriate $d$-sparsity scaling.} $\mathcal{O}(2N)$ to be compared with the quantum method at $\mathcal{O}(\sqrt{2N}\log 2)$. Even if we include the step to import the classical data from step one into quantum form, which can be done with $\mathcal{O}(\log_2N)$ overhead \cite{giovannetti2008quantum}, then we obtain a speedup for this step. 
\end{enumerate}

Still, despite the quantum speedup from step two, the full algorithm for general $M$ runs in $\mathcal{O}(3N\sqrt{M}\log M)$, remaining linear in $N$.

In order to recover a speedup we refine the application of the quantum algorithm by posing a version of the closest vector problem that avoids the initial calculation of $\ket{\phi}$ altogether. Note the equivalence of the calculations in Equation \ref{eq:exampleCalcs} with 
\begin{equation}
\label{eqn:trick}
\begin{aligned}
\begin{tikzpicture}[scale=0.5]
        \begin{pgfonlayer}{nodelayer}
                \node [style=none] (3) at (-3, 1) {};
                \node [style=none] (4) at (4, 1) {};
                \node [style=none] (5) at (0.5, 2.25) {};
                \node [style=none] (10) at (-2, 1) {};
                \node [style=none] (11) at (0.5, 1) {};
                \node [style=none] (12) at (3, 1) {};
                \node [style=none] (13) at (3.75, 1) {};
                \node [style=none] (14) at (-2.75, 0.5) {};
                \node [style=none] (15) at (-2, 0.5) {};
                \node [style=none] (16) at (0.5, 0.5) {};
                \node [style=none] (17) at (3, 0.5) {};
                \node [style=none] (18) at (3.75, 0.5) {};
                \node [style=none] (20) at (-2, 0) {$\mathcal{N}$};
                \node [style=none] (21) at (0.5, 0) {$\mathcal{S}$};
                \node [style=none] (22) at (3, 0) {$\mathcal{N}$};
                \node [style=none] (24) at (-2.75, -0.5) {};
                \node [style=none] (25) at (-0.25, -0.5) {};
                \node [style=none] (26) at (0.5, -0.5) {};
                \node [style=none] (27) at (1.25, -0.5) {};
                \node [style=none] (28) at (3.75, -0.5) {};
                \node [style=none] (29) at (0.5, -1.75) {};
                \node [style=none] (30) at (-2, -3.25) {kids};
                \node [style=none] (31) at (0.5, 2.75) {play};
                \node [style=none] (32) at (3, -3.25) {football};
                \node [style=state] (33) at (-2, -1.75) {};
                \node [style=state] (35) at (0.5, -1.75) {};
                \node [style=state] (36) at (3, -1.75) {};
                \node [style=none] (34) at (0.6, -3.25) {sports};
        \end{pgfonlayer}
        \begin{pgfonlayer}{edgelayer}
                \draw [style = thick]  (3.center) to (4.center);
                \draw [style = thick] (4.center) to (5.center);
                \draw [style = thick] (3.center) to (5.center);
                \draw [style = thick] (10.center) to (15.center);
                \draw [style = thick] (11.center) to (16.center);
                \draw [style = thick] (12.center) to (17.center);
                \draw [style = thick] (26.center) to (29.center);
                \draw [style = thick] (-2,-0.5) to (33.center);
                \draw [style = thick] (3,-0.5) to (36.center);
        \end{pgfonlayer}
\end{tikzpicture}
\end{aligned}
 \qquad \mbox{and} \qquad
\begin{aligned}
\begin{tikzpicture}[scale=0.5]
        \begin{pgfonlayer}{nodelayer}
                \node [style=none] (3) at (-3, 1) {};
                \node [style=none] (4) at (4, 1) {};
                \node [style=none] (5) at (0.5, 2.25) {};  
                \node [style=none] (10) at (-2, 1) {};
                \node [style=none] (11) at (0.5, 1) {};
                \node [style=none] (12) at (3, 1) {};
                \node [style=none] (13) at (3.75, 1) {};
                \node [style=none] (14) at (-2.75, 0.5) {};
                \node [style=none] (15) at (-2, 0.5) {};
                \node [style=none] (16) at (0.5, 0.5) {};
                \node [style=none] (17) at (3, 0.5) {};
                \node [style=none] (18) at (3.75, 0.5) {};
                \node [style=none] (20) at (-2, 0) {$\mathcal{N}$};
                \node [style=none] (21) at (0.5, 0) {$\mathcal{S}$};
                \node [style=none] (22) at (3, 0) {$\mathcal{N}$};
                \node [style=none] (24) at (-2.75, -0.5) {};
                \node [style=none] (25) at (-0.25, -0.5) {};
                \node [style=none] (26) at (0.5, -0.5) {};
                \node [style=none] (27) at (1.25, -0.5) {};
                \node [style=none] (28) at (3.75, -0.5) {};
                \node [style=none] (29) at (0.5, -1.75) {};
                \node [style=none] (30) at (-2, -3.25) {kids};
                \node [style=none] (31) at (0.5, 2.75) {play};
                \node [style=none] (32) at (3, -3.25) {football};
                \node [style=state] (33) at (-2, -1.75) {};
                \node [style=state] (35) at (0.5, -1.75) {};
                \node [style=state] (36) at (3, -1.75) {};
                \node [style=none] (34) at (0.6, -3.25) {politics};
        \end{pgfonlayer}
        \begin{pgfonlayer}{edgelayer}
                \draw [style = thick]  (3.center) to (4.center);
                \draw [style = thick] (4.center) to (5.center);
                \draw [style = thick] (3.center) to (5.center);
                \draw [style = thick] (10.center) to (15.center);
                \draw [style = thick] (11.center) to (16.center);
                \draw [style = thick] (12.center) to (17.center);
                \draw [style = thick] (26.center) to (29.center);
                \draw [style = thick] (-2,-0.5) to (33.center);
                \draw [style = thick] (3,-0.5) to (36.center);
        \end{pgfonlayer}
\end{tikzpicture}
\end{aligned}  
\end{equation}
Rather than directly calculating $\ket{\phi}$, which is not relevant to the classification task, we can formulate a closest vector problem where $\vec{s} = \ket{play}$, $M=2$ and $U = \{\ket{kids}\otimes\ket{v} \otimes\ket{football},\ket{kids}\otimes\ket{w} \otimes\ket{football}\}$.
The runtime of this \emph{deferred quantum algorithm}, including import, will then be $\mathcal{O}(\sqrt{MN})$, showing our desired quadratic speedup in both variable.

We extend this to result to include general sentences in the CSC model with the following theorem.

\begin{theorem}
For an $N$-dimensional noun meaning space, there exists a quantum algorithm to classify any CSC model sentence composed of $n$ tensors $\vec{w_0},\vec{w_1},...,\vec{w_{n-1}}$ into $M$ classes with time $\mathcal{O}(\sqrt{MN}\log M)$. This improves on classical methods that require $\mathcal{O}(NM)$.
\end{theorem}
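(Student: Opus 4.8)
The plan is to extend the \emph{deferred quantum algorithm} illustrated in Equations~\ref{eq:exampleCalcs}--\ref{eqn:trick}: reduce the classification of an arbitrary CSC sentence to a single instance of the closest vector problem, then apply the complexity theorem of Wiebe, Braun and Lloyd quoted above, exploiting the two standing assumptions $r_{\mathrm{max}}\propto 1/\sqrt{d}$ and $\epsilon=\Theta(1/\sqrt{N})$. I would begin by recording the structural fact that drives everything: for any well-typed sentence the grammatical reduction is, by construction of the CSC model and the compact-closed semantics of pregroup grammar, a linear map $R$ assembled entirely from caps $\sum_i\langle ii|$ and identities, so that the sentence meaning is $\ket{\phi}=R\,(\vec{w_0}\otimes\cdots\otimes\vec{w_{n-1}})\in\mathcal{S}\simeq\mathcal{N}$; the classification task then amounts to finding $\arg\max_j|\langle \vec{c_j}\,|\,\phi\rangle|^2$ over the $M$ class representatives $\vec{c_0},\dots,\vec{c_{M-1}}$ (centroids, or --- reading $M$ as the number of training vectors --- nearest-neighbour representatives) and assigning the sentence to the winning class.

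The heart of the proof is a categorical \emph{yanking} move applied to each scalar $\langle \vec{c_j}\,|\,\phi\rangle$. Regarding this number as the value of a closed string diagram built from the word states, the caps of $R$, and the co-state $\langle \vec{c_j}|$, the snake equations let me redraw it as an ordinary inner product $\langle \vec{s}\,|\,\vec{v}_j\rangle$ in $\mathcal{N}^{\otimes k}$ with $k=O(n)$, where $\vec{s}$ is one fixed word tensor chosen so that \emph{all} of its legs are contracted in the reduction (the verb, in the subject--verb--object case) and $\vec{v}_j$ is the tensor product of the remaining word tensors with $\vec{c_j}$, after a single $j$-independent precomputation that contracts any grammatical links occurring \emph{among} the non-head words (of the same order as the classical derivation of $\ket{\phi}$). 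This is exactly the passage displayed in Equation~\ref{eqn:trick}, now carried out for a general reduction diagram. The essential feature is that $\vec{v}_j$ depends on $j$ only through the single tensor factor $\vec{c_j}$ and carries no $j$-dependent contractions, so the oracles $\mathcal{O}$ and $\mathcal{F}$ for $\vec{s}$ and for each $\vec{v}_j$ follow from the per-word oracles with $O(1)$ arithmetic overhead, and the four assumptions of Section~\ref{sec:qalg} are inherited: the number of non-zero entries and the bound $r_{\mathrm{max}}$ are both \emph{multiplicative} across a tensor product, so $r_{\mathrm{max}}\propto 1/\sqrt{d}$ survives, and a tensor product of unit vectors is again a unit vector.

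It then remains to invoke the quoted algorithm and tally costs. Its expected query count is $\mathcal{O}\!\big(\sqrt{M}\,(d^2r_{\mathrm{max}}^4/\epsilon)\log M\big)$ for a fixed success probability; the hypothesis $r_{\mathrm{max}}\propto 1/\sqrt{d}$ makes $d^2r_{\mathrm{max}}^4=O(1)$, and $\epsilon=\Theta(1/\sqrt{N})$ --- a requirement of the application rather than of the inflated dimension $N^{k}$ --- collapses the bound to $\mathcal{O}(\sqrt{MN}\log M)$, while importing the classical word vectors into bucket-brigade qRAM costs only $\mathcal{O}\!\big(\log(N^{O(n)})\big)=\mathcal{O}(n\log N)$ per access, which is subdominant; the algorithm's built-in maximum-finding then returns the winning class. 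Classical direct evaluation, by contrast, must form $\ket{\phi}$ and compute all $M$ inner products, costing $\mathcal{O}(NM)$. The step I expect to be the main obstacle is the generality of the yanking reformulation --- verifying it for \emph{every} pregroup reduction, not just the subject--verb--object example, while keeping the $\vec{v}_j$-oracles cheap (this is exactly why $\vec{s}$ must be taken to be a fully contracted word and why non-head links are absorbed into one precomputation) and confirming that $r_{\mathrm{max}}\propto 1/\sqrt{d}$ genuinely propagates through the tensoring; a comparatively minor loose end is the bookkeeping of conjugates relating the diagrammatic cap-scalar to the Hermitian inner product, harmless in a suitable real basis.
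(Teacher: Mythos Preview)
Your approach diverges from the paper's at the key structural step, and the divergence leaves a real gap. You take $\vec{s}$ to be a \emph{single} word tensor (the head) and absorb all other words into $\vec{v}_j$; since non-head words may themselves be joined by caps (as in ``John saw Mary read a book'', where ``read'', ``Mary'', ``a'', ``book'' are linked among themselves), you are forced into what you call a ``$j$-independent precomputation that contracts any grammatical links occurring among the non-head words''. But that precomputation is itself a sentence-meaning derivation for the sub-phrases and its cost grows polynomially in $N$ with degree depending on the sentence depth --- precisely the overhead the deferred trick is designed to remove. Calling it ``of the same order as the classical derivation of $\ket{\phi}$'' concedes the point: the theorem's $\mathcal{O}(\sqrt{MN}\log M)$ bound is meant to hold without any such additive $N$-polynomial term, so your construction does not establish the stated complexity for general sentences, only for those (like noun--verb--noun) whose head's neighbours are pairwise disconnected.

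The paper's argument is different and sidesteps this entirely. It observes that the derivation diagram, with words as vertices and caps as edges, is a \emph{tree} (pregroup reductions are planar and acyclic), hence bipartite. A two-colouring --- start at the head word, then alternate layers by graph distance --- partitions the words into a top set and a bottom set so that \emph{every} cap runs between the two layers and none runs within a layer. One then takes $\vec{s}$ to be the pure tensor product of all top-layer words and $\vec{v}_j$ the pure tensor product of all bottom-layer words together with $\vec{c}_j$; no contraction is performed on either side, only a permutation of tensor factors. This keeps the oracles $\mathcal{O},\mathcal{F}$ uniformly cheap for \emph{any} CSC sentence and is exactly the missing ingredient your ``main obstacle'' paragraph anticipates but does not supply.
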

\begin{proof}
The trick we play in Equation \ref{eqn:trick} amounts to splitting the sentence derivation into a bipartite graph.  As the CSC connections are based on a pregroup derivation, the connections will always form a tree, taking words as nodes and connections as edges. Trees can always be partitioned into bipartite graphs, thus, up to the ordering of inputs on each tensor which can be kept track of, we can always give a deferred quantum algorithm with associated speedup for any such CSC sentence.
 The following procedure explicitly details how to construct this bipartite partitioning.

For every CSC sentence there is one word that acts as the \emph{head} of the derivation.  This is the word whose output $S$ wire contains the sentence meaning following its derivation's linear map. In Equation \ref{eqn:phi} this is the word ``play". Connect the dangling wire of the head word $\vec{w_h}$ with the vector $\vec{v_i}$ against which similarity is being measured.  Starting with this head word we then separate the sentence into a top layer and a bottom layer with the following steps.  Assign the head word to the top layer. Place every word it is connected to on the bottom layer. Next for every word just assigned to the bottom, take all their connected words, which are not yet assigned, and assign them to the top.  Continue this procedure while alternating top and bottom until all words are assigned. This gives a simple two-coloring of the derivation graph. 
\end{proof}

\begin{example}
Consider the following sentence \cite{dimitri}:
\[
\begin{tikzpicture}[scale=1]
                \node [style=state, hflip] (0) at (-6, 1) {1};
                \node [style=none] (1) at (-5.25, 0.25) {};
                \node [style=state, hflip] (2) at (-2, 1) {2};
                \node [style=state, hflip] (3) at (2, 1) {2};
                \node [style=state, hflip] (4) at (3.5, 1) {3};
                \node [style=none] (5) at (-1.25, 0.25) {};
                \node [style=none] (6) at (1.25, 0.25) {};
                \node [style=none] (7) at (2.75, 0.25) {};
                \node [style=none] (8) at (-4.5, 1) {};
                \node [style=none] (9) at (-3.5, 1) {};
                \node [style=none] (10) at (-0.5, 1) {};
                \node [style=none] (11) at (0.5, 1) {};
                \node [style=none] (label1) at (0, 1.175) {1};
                \node [style=state, hflip, xscale=2] (12) at (0, 1) {};
                \node [style=none] (label1) at (-4, 1.175) {0};
                \node [style=state, hflip, xscale=2] (13) at (-4, 1) {};
                \node [style=none] (14) at (-4, -1) {};
                \node [style=none] (15) at (-1.75, -0.5) {};
                \node (16) at (-6, 1.75) {John};
                \node (17) at (-4, 1.75) {saw};
                \node (18) at (-2, 1.75) {Mary};
                \node (19) at (0, 1.75) {read};
                \node (20) at (2, 1.75) {a};
                \node (21) at (3.5, 1.75) {book};
                \draw [bend right=45, looseness=1.00] (0) to (1.center);
                \draw [bend right=45, looseness=1.00] (2) to (5.center);
                \draw [in=-90, out=0, looseness=1.00] (7.center) to (4);
                \draw [bend left=45, looseness=1.00] (8.center) to (1.center);
                \draw (13) to (14.center);
                \draw [bend left=45, looseness=1.00] (10.center) to (5.center);
                \draw [bend right=45, looseness=1.00] (9.center) to (15.center);
                \draw [bend right=45, looseness=1.00] (11.center) to (6.center);
                \draw [bend right=45, looseness=1.00] (6.center) to (1.9,1);
                \draw [bend right=45, looseness=1.00] (2.1,1) to (7.center);
                \draw [bend right=45, looseness=1.00] (15.center) to (12);
\end{tikzpicture}
\]
where we have labeled the vectors based on their depth in the derivation tree.  The two-layer form assigns even vectors to the top layer and odd vectors to the bottom:
\[
\begin{tikzpicture}[scale=1]
                \node [style=state] (0) at (-5, 0) {};
                \node [style=state, hflip] (1) at (-2.5, 1) {};
                \node [style=state, hflip] (2) at (-1.25, 1) {};
                \node [style=state] (3) at (-0.75, 0) {};
                \node (4) at (-4.5, 1) {};
                \node (5) at (-3.5, 1) {};
                \node (6) at (-2.5, -1) {};
                \node (7) at (-2, 0) {};
                \node [style=state, xscale=2] (8) at (-2.5, 0) {};
                \node [style=state, hflip, xscale=2] (9) at (-4, 1) {};
                \node (10) at (-4, -1) {};
                \node (11) at (-5, -0.75) {John};
                \node (12) at (-4, 1.75) {saw};
                \node (13) at (-2.5, 1.75) {Mary};
                \node (14) at (-2.5, -0.75) {read};
                \node (15) at (-1.25, 1.75) {a};
                \node (16) at (-0.75, -0.75) {book};
                \draw (9) to (10.center);
                \draw (4.center) to (0);
                \draw (1) to (-2.5,0);
                \draw (7.center) to (-1.4,1);
                \draw (-1,1) to (3);
                \draw (5.center) to (-3,0);   
\end{tikzpicture}
\]
\end{example} 

\noindent Hooking the dangling wire up to a classifying vector reduces the similiarity computation to the calculation of a single inner product. Note that this procedure works for any derivation tree,\footnote{Even non-pregroup and non-CCG models will work as long as there is some tree derivation.} so sentence fragments, such as noun phrases, can be easily analyzed in exactly the same manner.

\section{Noise tolerance and Conclusion}  

To reap the benefits of quantum algorithm in the domain of natural language processing, we  apply a new technique to defers the calculation of a sentences compositional meaning, eliminating the overhead costs. By this method we are able to introduce a quantum algorithm for calculating sentence similarity that offers quadratic speedup over classical direct calculation and Monte-Carlo methods. These kinds of algorithms are particularly attractive for practical applications of quantum computing as noisy results can be tolerated: in our case when the desired errors is lower bounded by $1/\sqrt{N}$.  Vector space models are already inherently noisy and typical tasks allow for errors in results, so this restriction does not affect rule out their efficacy. 

An additional point is that the density matrix formalism of \cite{piedeleu2015open, bankova2016graded} can also be naturally modeled by mixed states of quantum systems.  In fact, this analogy was the genesis for the theory of disambiguation presented there, as another example of the shared structure that led to the results presented here. At a basic level, our work exploits the abstract connection between natural language processing and quantum information.  More formally,  we can see both quantum computation in the category of finite dimensional Hilbert spaces and linear maps \cite{abramsky2004categorical,qcs-notes} and CSC in the product category of pregroup grammar and finite dimensional vectors spaces \cite{coecke2010mathematical}. The connection between these two (as dagger compact categories) makes the application of one to the other apparent.      
   
%


\bibliographystyle{eptcs}
\providecommand{\urlalt}[2]{\href{#1}{#2}}
\bibliography{NLP}

\end{document}